\documentclass{article}

\usepackage{arxiv}

\usepackage[utf8]{inputenc} % allow utf-8 input
\usepackage[T1]{fontenc}    % use 8-bit T1 fonts
\usepackage{hyperref}       % hyperlinks
\usepackage{url}            % simple URL typesetting
\usepackage{booktabs}       % professional-quality tables
\usepackage{amsfonts}       % blackboard math symbols
\usepackage{nicefrac}       % compact symbols for 1/2, etc.
\usepackage{microtype}      % microtypography

\usepackage{lipsum}         % Can be removed after putting your text content
\usepackage{graphicx}
\usepackage{natbib}
\usepackage[most]{tcolorbox} % 核心宏包
\usepackage{amsmath}         % 处理数学公式
\usepackage{amssymb}
\usepackage{amsthm}
\usepackage{doi}
\usepackage{enumitem}
\usepackage{adjustbox}
\usepackage{array}
\usepackage{multirow}
\usepackage{diagbox}
\usepackage{makecell}
\usepackage{subcaption}
\usepackage{authblk}
\usepackage{wrapfig}
\usepackage{cleveref}
\usepackage{float}
\usepackage[section]{placeins}
\usepackage{algorithm}
\usepackage{algorithmic}
\newtheorem{theorem}{Theorem}
\newtheorem{corollary}{Corollary}
\newtheorem{lemma}{Lemma}
\theoremstyle{remark}
\newtheorem{remark}{Remark}

\graphicspath{{images/}}

\title{DEFT: Data-Efficient Frequency-domain Top-k Sampling via Inverse Discrete Fourier Transform for Spatiotemporal Dynamical Systems Modeling}

\newif\ifuniqueAffiliation
\uniqueAffiliationtrue

\author{
    \fontsize{12pt}{14pt}\selectfont
    \textbf{Hengbo Xiao\textsuperscript{1*}}, \textbf{Jiale Liu\textsuperscript{1*}}, 
    \textbf{Jiahao Song\textsuperscript{1}},
    \textbf{Guannan He\textsuperscript{1\textdagger}}
    \vspace{6pt} \\
    \textsuperscript{1} Peking University \quad 
    \texttt{gnhe@pku.edu.cn}
}

\hypersetup{
}

\begin{document}
\maketitle
\renewcommand\thefootnote{}
\makeatletter
\def\Hy@footnote@currentHref{author.footnote}% non-empty anchor for hyperref
\makeatother
\footnotetext{%
\textsuperscript{*} Equal contribution \quad
\textsuperscript{\textdagger} Corresponding author
}% avoid hyperref empty anchor warning
\renewcommand\thefootnote{\arabic{footnote}}
\addtocounter{footnote}{-1}

\begin{abstract}
Modeling spatiotemporal dynamical systems governed by partial differential equations (PDEs) poses two major challenges: it either requires expensive physics-based simulators that entail iterative numerical solving at high computational cost, or it depends on abundant training data, yet purely data-driven models often generalize poorly to downstream dynamic operating conditions. We propose DEFT, a frequency-domain data sampling method that identifies the dominant Fourier modes of a physical system and systematically varies the corresponding amplitudes and phases to generate physically consistent training data via the inverse discrete Fourier transform. In addition, we derive a generalization bound of this method. We note that it also provides a theoretically principled criterion for selecting $K$. We evaluate the proposed method through three sets of experiments, each targeting a distinct aspect of its utility. First, we validate the framework on canonical PDEs solving demonstrating that it outperforms traditional methods when the system is dominated by a few prominent frequency components. Second, we employ DEFT as a data-value filter on the diffusion--sorption and Burgers equations of PDEBench, showing that it reduces data requirements by $40\%$ while sacrificing less than $2\%$ in predictive accuracy. Third, to evaluate DEFT for more challenging and practically relevant problems, we validate it in the battery degradation PDE system, achieving consistently high predictive accuracy across various test datasets with $R^2$ values exceeding $0.99$. Moreover, the learned frequency-domain features transfer to other battery chemistries with only $20\%$ of the fine-tuning data. These results demonstrate that DEFT is an effective data-sampling method for efficient operator learning.
\end{abstract}

\section{Introduction}
Spatiotemporal dynamical systems pervade science and engineering. Their state evolution, typically described by partial differential equations (PDEs), exhibits high nonlinearity and multi-scale coupling: in lithium-ion batteries, ion migration spans seconds to hours while capacity fading unfolds over years. Experimental measurements of such systems are costly and high-fidelity numerical simulations are computationally intensive (a single pseudo-two-dimensional (P2D) battery degradation simulation may take hours to days), so acquiring training data that cover the full operating space is a formidable challenge. Existing modeling paradigms face a dilemma: physics-based models are accurate but too expensive for large-scale data generation, whereas data-driven models are efficient at inference but generalize weakly beyond their training distribution. Operator networks such as DeepONet~\citep{lulearning2021} and Fourier Neural Operator~\citep{li2021fourierneuraloperatorparametric} are expressive enough to learn solution operators, yet their generalization ability hinges on the choice of input functions used for training. The question of how to acquire the most representative training data under a finite budget of expensive oracle calls has become a key bottleneck for scientific machine learning.

Time series from dynamical systems have distinctive structure: adjacent states are strongly correlated, evolution is non-stationary, and the spectral energy distribution is structured. Conventional sampling methods clash with these properties. Random sampling assumes i.i.d.\ samples; uniform sampling presumes a stationary input distribution; stratified sampling requires partitions that are difficult to construct in high-dimensional spatiotemporal state spaces; Bayesian optimization incurs $O(n^{3})$ iteration costs, and its uncertainty modeling is fragile for temporally correlated signals. Fundamentally, all of them collect data \emph{passively} in the time domain, without exploiting spectral structure to inform sampling decisions.

A frequency-domain viewpoint offers a way out. Response signals of physical systems are typically \emph{low frequency dominated}: a small number of frequency components carry the vast majority of the energy: battery current profiles, for instance, concentrate in the $0.001$--$0.1$\,Hz band of macroscopic cycling, while components above $1$\,Hz mainly reflect measurement noise and power-electronics ripple. Retaining the energy-dominant components therefore suffices both for high-fidelity signal reconstruction and, crucially, for capturing the dominant dynamics of the underlying system. This sparsity in the frequency domain aligns with compressed sensing theory~\citep{donoho2006compressed}, which guarantees exact reconstruction of sparse signals from a small set of dominant modes.

Building on this observation, we propose an IDFT-based frequency-domain sampling optimization method, organized as a three-tier framework of physics mechanism, frequency-domain generation, and operator learning. At the foundation, a physics-based model provides high-fidelity data; at the intermediate layer, spectral analysis of a small observation set identifies the dominant frequencies with the largest energy contribution, and the IDFT extends these spectral characteristics into diverse synthetic input waveforms, which the physics model then labels into a training set; at the top layer, a deep operator network approximates the input-to-output mapping. Unlike conventional sampling, which selects a subset from an existing large dataset, our method uses small-sample spectral features to guide the physics model in generating a large, diverse training set.

The framework is accompanied by an adaptive IDFT sampling algorithm that ranks frequency components by average spectral energy, truncates the spectrum to energy-dominant components, and synthesizes diverse training waveforms by changing amplitudes and phases in the retained subspace. According to our analysis, the time complexity of this algorithm is $O((M+M_{\mathrm{synth}})N\log N)$ and the space complexity is $O((M+M_{\mathrm{synth}})N)$. In addition, we prove a generalization error bound for IDFT sampling that is decomposed into the deployment and truncation error $O((L_F+L_N)K^{-(\alpha-1/2)})$, the approximation error $O(m^{-s/K})$, the statistical error $O(\sqrt{\log(1/\delta)/n})$, and the optimization residual. Note that $K$ pushes the truncation and approximation terms in opposite directions, so there is an optimal budget $K^{*}$. This offers a criterion for selecting the optimal $K$.

Experiments confirm the framework in three domains. In PDE operator learning (Burgers and Allen--Cahn), DEFT sampling is best whenever the solution is dominated by several frequencies. Furthermore, spectral analysis can also be used in compressing the diffusion--sorption and Burgers PDE operator training set by half at a few precision cost. In P2D battery capacity prediction, 40--60 retained low-frequency components achieve $R^2>0.99$, generalizing to constant-current constant-voltage (CC-CV) charging protocols and electric-vehicle (EV) driving profiles. The model, originally developed for the Li-ion battery, transfers to other battery chemistries and attains $R^2>0.96$ with at most 20\% fine-tuning data.

\section{Related Work}
\paragraph{Data sampling methods} Classical sampling strategies include random, uniform (grid), Latin hypercube sampling (LHS)~\citep{mckay1979comparison}, and Bayesian optimization~\citep{snoek2012practical}. Random sampling and uniform sampling ignore the signal structure and suffer from the curse of dimensionality. LHS enforces uniform marginal coverage, but relies on a linear-projection assumption that breaks for temporally correlated data. Bayesian optimization builds a Gaussian-process surrogate whose cost grows cubically with sample size. Modern active learning methods select maximum uncertain samples but suffer from a circular dependence on the initial model. Conventional generative approaches (GANs, VAEs, diffusion models) require abundant training data and cannot guarantee physical consistency. More importantly, all operate in the time or feature domain and lack generalization guarantees.

\paragraph{Frequency-domain analysis} Frequency domain analysis is widely used in control systems, and the most important part of it is the Fourier transform. Fourier analysis~\citep{oppenheim1997signals} and Fast Fourier Transform (FFT) reduce the computational complexity of the Discrete Fourier Transform (DFT) from $O(N^2)$ to $O(N \log N)$, which is the basis of our work. Fourier modes have been used for the generation of synthetic data in traffic and climate modeling~\citep{xu2021a,wu2024earthfarsser}. Fourier feature networks have also been studied for solving multi-scale PDEs~\citep{wang2021eigenvector}, but in previous work, the choice of bandwidth was heuristic, without a quantitative link between spectral truncation and generalization.

\paragraph{Operator learning} DeepONet~\citep{lulearning2021} and the Fourier Neural Operator~\citep{li2021fourierneuraloperatorparametric,kovachki2023neural} learn mappings between function spaces; physics-informed networks~\citep{raissi2019physics} incorporate PDE residuals as a regularization term in the loss function. These architectures have achieved great success in many areas. However, they can only generalize well when the training data cover the critical regions of the input function space. The lack of generalization performance limits the practical use of these methods.

\paragraph{Generalization theory} The goal of machine learning generalization theory is to understand the difference between training error and test error, and to provide provable guarantees for model performance. Rademacher complexity~\citep{bartlett2002rademacher,mohri2018foundations}, uniform stability~\citep{bousquet2002stability,hardt2016train}, concentration inequalities~\citep{mcdiarmid1989method}, spectrally normalized bounds~\citep{bartlett2017spectrally}, and approximation theory on Sobolev classes~\citep{yarotsky2017error,petersen2018optimal,adams2003sobolev,evans2010partial} provide math tools to analyze it. But classical generalization theory has its limits when it comes to scientific computing scenarios. Their i.i.d. assumption is broken by non-standard sampling strategies like active sampling and frequency domain truncation. The looseness of its bounds makes it difficult to guide practical applications. Its requirement for a finite hypothesis space conflicts with the over-parameterized nature of modern deep learning. These limitations show the need for a new theoretical framework that can handle non-standard sampling, work with infinite-dimensional function spaces, provide tight bounds, and offer practical guidance.

\section{Method: DEFT Sampling}
\label{sec:method}

\subsection{Overview and Problem Formulation}

Let $\mathcal{X}\subset\mathbb{R}^d$ be a space of discretized input signals, $\mathcal{D}$ a deployment distribution on $\mathcal{X}$, and $F:\mathcal{X}\to\mathbb{R}$ a target solution operator. Given an operator-network hypothesis class $\mathcal{H}$ and a loss $\ell$, the goal is to minimize the population risk $\mathcal{E}(N_\theta)=\mathbb{E}_{x\sim\mathcal{D}}\,\ell(N_\theta(x),F(x))$ over $N_\theta\in\mathcal{H}$, using at most $B$ oracle calls for labels, by training on the empirical risk
\begin{equation}
\hat{\mathcal{E}}_{\mathcal{S}}(N_\theta)=\frac{1}{|\mathcal{S}|}\sum_{(x,y)\in\mathcal{S}}\ell(N_\theta(x),y).
\label{eq:erm}
\end{equation}

\begin{figure*}[!htbp]
\centering
\includegraphics[width=0.85\textwidth]{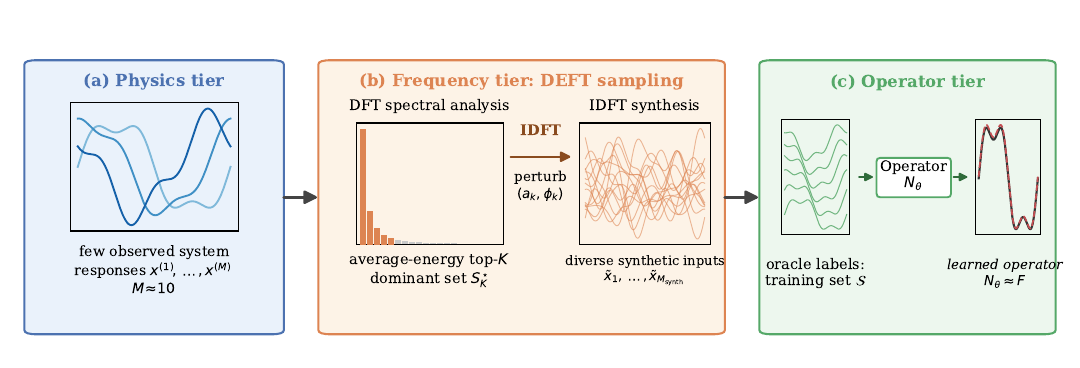}
\caption{The framework. (a) Example observed input fields. (b) Based on DFT analysis, the average-energy top-$K$ frequencies are selected; DEFT perturbs their amplitudes and phases to produce a variety of synthesized input fields. (c) The synthesized fields form a large and physically consistent training set}
\label{fig:framework_general}
\end{figure*}

The classical sampling mentioned above selects a subset from an existing large dataset. In complex-system modeling one instead faces a \emph{small-sample dilemma}: only a handful of observations $\mathcal{D}_{\mathrm{obs}}=\{x^{(1)},\dots,x^{(M)}\}$ (e.g. $M\approx 10$ physics records) are available, far from enough to train a deep model. We therefore replace "subset selection" with \emph{data generation}: we extract the spectral characteristics of the system from $\mathcal{D}_{\mathrm{obs}}$ and synthesize a diverse large training set $\mathcal{S}$ of size $M_{\mathrm{synth}}\gg M$. The pipeline is illustrated in (Figure~\ref{fig:framework_general}).

\subsection{Spectral Analysis and Energy-Dominant Truncation}

For a time series length-$N$ $\{x_n\}$, the DFT and its inverse are
\begin{equation}
X_k=\sum_{n=0}^{N-1}x_n e^{-\mathrm{i}\frac{2\pi}{N}kn},\qquad
x_n=\frac{1}{N}\sum_{k=0}^{N-1}X_k e^{\mathrm{i}\frac{2\pi}{N}kn}.
\label{eq:dft}
\end{equation}
By Parseval's identity, $\|x\|_2^2=\frac{1}{N}\sum_{k}|X_k|^2$: the energy is conserved between domains, so the contribution of every frequency component is explicitly measurable. Given $\mathcal{D}_{\mathrm{obs}}$, we compute the spectrum $X^{(i)}$ for each sample and the \emph{average energy} of the cross-sample.
\begin{equation}
\bar{E}_k=\frac{1}{M}\sum_{i=1}^{M}\bigl|X_k^{(i)}\bigr|^2,
\label{eq:avgenergy}
\end{equation}
and sort the frequencies in descending order. Averaging suppresses per-sample noise and extracts the system's stable dominant frequencies; empirically $M=10$ samples suffice because the dominant frequencies are dictated by the system's intrinsic dynamics rather than by particular inputs.

Given a budget $K_{\mathrm{freq}}\ll N$, we retain the top-$K_{\mathrm{freq}}$ components $\mathcal{K}_{\mathrm{retain}}$ and eliminate the rest. It can be written as $\tilde{x}=\mathcal{F}^{-1}[H\odot X]$, where $H$ is a binary mask. Unlike fixed-cutoff low-pass filtering, the retained set is chosen adaptively by energy ranking. The reconstruction error is exactly the discarded normalized energy,
\begin{equation}
\|x-\tilde{x}\|_2^2=\frac{1}{N}\sum_{k\notin\mathcal{K}_{\mathrm{retain}}}|X_k|^2,
\label{eq:recerr}
\end{equation}
which yields a principled rule for choosing $K_{\mathrm{freq}}$: pick the smallest value retaining $\ge95\%$ of the total energy. Moreover, the energy ranking is optimal among all selections of the same size: for any $\mathcal{K}_{\mathrm{random}}$ with $|\mathcal{K}_{\mathrm{random}}|=K_{\mathrm{freq}}$, the discarded energy of $\mathcal{K}_{\mathrm{retain}}$ is minimized.

\subsection{Synthetic Data Generation via DEFT}

To generate diverse synthetic inputs, we perturb amplitudes and phases inside the retained subspace: for the $j$-th synthetic waveform,
\begin{equation}
\tilde{X}_k^{(j)}=
\begin{cases}
a_k^{(j)}e^{\mathrm{i}\phi_k^{(j)}}, & k\in\mathcal{K}_{\mathrm{retain}},\\
0, & \text{otherwise},
\end{cases}
\label{eq:sparse}
\end{equation}
followed by a full IDFT $u^{(j)}=\mathcal{F}^{-1}[\tilde{X}^{(j)}]$, where phases $\phi_k^{(j)}\sim U[0,2\pi)$ and amplitudes $a_k^{(j)}$ match the amplitude statistics of $\mathcal{D}_{\mathrm{obs}}$. Varying these parameters yields periodic, step, and composite operating profiles. Each synthetic waveform is then passed through the physics model to obtain the labeled pair $(u^{(j)},y^{(j)})$, guaranteeing the physical consistency of the synthetic training set.

\begin{algorithm}[!htbp]
\caption{DEFT Dominant-Frequency Identification and Synthetic Data Generation}
\label{alg:deft}
\begin{algorithmic}[1]
\REQUIRE Observation set $\mathcal{D}_{\mathrm{obs}}=\{x^{(1)},\dots,x^{(M)}\}$; retained frequency count $K_{\mathrm{freq}}$; synthetic sample count $M_{\mathrm{synth}}$
\ENSURE Retained frequency set $\mathcal{K}_{\mathrm{retain}}$; training set $\mathcal{S}$
\STATE {Step 1: Spectral analysis.} For $i=1$ to $M$, compute $X_k^{(i)}$ by Eq.~\eqref{eq:dft} and $\bar{E}_k$ by Eq.~\eqref{eq:avgenergy}.
\STATE {Step 2: Dominant-frequency identification.} Sort $\bar{E}_{k(1)}\ge\cdots\ge\bar{E}_{k(N)}$; set $\mathcal{K}_{\mathrm{retain}}=\{k(1),\dots,k(K_{\mathrm{freq}})\}$ (or the smallest $K_{\mathrm{freq}}$ retaining $\ge95\%$ energy).
\STATE {Step 3: Synthetic waveform generation.} For $j=1$ to $M_{\mathrm{synth}}$: sample $a_k^{(j)},\phi_k^{(j)}$ for $k\in\mathcal{K}_{\mathrm{retain}}$, build $\tilde{X}^{(j)}$ by Eq.~\eqref{eq:sparse}, and compute $u^{(j)}$ by IDFT.
\STATE {Step 4: Physics simulation.} Feed each $u^{(j)}$ into the physics model and collect $(u^{(j)},y^{(j)})$.
\STATE {Step 5: Quality assessment.} Evaluate the trained model on a validation set; if the target is unmet, adjust $K_{\mathrm{freq}}$ or enrich waveform diversity, and iterate.
\RETURN $\mathcal{K}_{\mathrm{retain}}$, $\mathcal{S}=\{(u^{(j)},y^{(j)})\}_{j=1}^{M_{\mathrm{synth}}}$
\end{algorithmic}
\end{algorithm}

Algorithm~\ref{alg:deft} summarizes the procedure. Each DFT/IDFT incurs $O(N\log N)$ time via FFT, which dominates the total complexity of $O((M+M_{\mathrm{synth}})N\log N)$. Only the high-quality observation samples used for spectral analysis and the generated training set need storage, yielding a space complexity of $O((M+M_{\mathrm{synth}})N)$. Since $M\ll M_{\mathrm{synth}}$ (e.g., $M=10$ vs.\ $M_{\mathrm{synth}}=10^{3}$), spectral analysis of a handful of observations guides the generation of several orders of magnitude more training data. The frequency-domain compression ratio $\rho=K_{\mathrm{freq}}/N$ is typically $0.05$--$0.2$. In practice, this implies that merely $5\%$ of the frequency components suffice to reconstruct the essential signal features. Note, however, that truncation does not reduce the forward cost but rather lowers the intrinsic complexity of the synthetic data, benefiting training efficiency and generalization (Section~\ref{sec:theory}). To avoid confusion, we emphasize that the deployed model still consumes the raw $d$-dimensional time-domain input; dimensionality reduction applies to the \emph{generation} process, not the inference interface.

\subsection{Spectral-Score Filtering for Dataset Compression}
\label{sec:DEFT_filtering}

Algorithm~1 addresses the small-sample regime by extracting spectral
signatures from a handful of observations and synthesizing a large and
physically consistent training set via DEFT. However, many operator-learning
workflows face the opposite situation: large simulation
campaigns or public benchmarks such as PDEBench~\citep{takamoto2022pdebench}
already provide far more trajectories than the training budget allows, and
the limiting factor shifts from data \emph{generation} to data
\emph{selection}. A natural question is therefore: given a candidate pool
$\mathcal{D}_{\mathrm{obs}}$ and a budget $\rho\in(0,1]$, which fraction
$\rho$ of the trajectories carries the most learning value?

DEFT suggests a cheap and model-free solution. Steps~1--2 of Algorithm~1 identify the dominant frequency subspace $\mathcal{K}_{\mathrm{retain}}$ that carries $\ge\gamma$ of the energy averaged in the group, i.e., the subspace that carries the dominant variability of the initial conditions. The same subspace induces a per-sample value criterion: a trajectory whose initial condition concentrates most of its energy inside $\mathcal{K}_{\mathrm{retain}}$ lies on the dominant spectral manifold of the dataset and is intrinsically more representative and hence more informative per unit of training budget than one whose energy leaks into rarely excited modes.

\begin{equation}
\label{eq:score}
s_i \;=\;
\frac{\sum_{k\in\mathcal{K}_{\mathrm{retain}}}|X_k^{(i)}|^2}
     {\sum_{k}|X_k^{(i)}|^2},
\end{equation}

where $X_k^{(i)}$ is the DFT of the initial DC-removed condition
$u^{(i)}(\cdot,t{=}0)$ along the spatial axis. Ranking the pool by $s_i$
prefix and keeping the top-$\lfloor\rho M\rfloor$ prefix yields a nested family of
subsets: any smaller budget is a prefix of any larger one, so a single
ranking serves all compression rates.

The \textsc{Filter} branch of Algorithm~2 formalizes this procedure. Section~\ref{sec:filtering} evaluates the resulting compression behavior on two 1D PDEs of PDEBench.

\begin{algorithm}[!htbp]
\caption{DEFT: Dominant-Frequency Identification and Its Applications}
\label{alg:deft_filter}
\begin{algorithmic}[1]
\REQUIRE Training pool $\mathcal{D}_{\mathrm{obs}}=\{x^{(i)}\}_{i=1}^{M}$;
  energy threshold $\gamma=0.95$; track $\in\{\textsc{Generate},\textsc{Filter}\}$
\ENSURE Retained frequencies $\mathcal{K}_{\mathrm{retain}}$; training set $\mathcal{S}$
\STATE \textbf{Spectral analysis:} remove DC from each $x^{(i)}$, compute
  $X_k^{(i)}$ by Eq.~\eqref{eq:dft} and pool-averaged $\bar{E}_k$ by Eq.~\eqref{eq:avgenergy}.
\STATE \textbf{Frequency identification:} take
  $\mathcal{K}_{\mathrm{retain}}$ as the smallest top-$K$ set of $\bar{E}_k$
  covering $\ge\gamma$ of total energy.
\IF{track $=$ \textsc{Generate}}
  \STATE \textbf{Synthesize:} sample amplitudes/phases on
    $\mathcal{K}_{\mathrm{retain}}$ (Eq.~\eqref{eq:sparse}), invert by IDFT,
    simulate with the physics model, and iterate on $K$ if validation is unmet;
    $\mathcal{S}\leftarrow\{(u^{(j)},y^{(j)})\}$.
\ELSE
  \STATE \textbf{Filter:} score each sample by
    $s_i=\sum_{k\in\mathcal{K}_{\mathrm{retain}}}|X_k^{(i)}|^2/\sum_k|X_k^{(i)}|^2$,
    and let $\mathcal{S}$ be the top-$\lfloor\rho M\rfloor$ of
    $\mathcal{D}_{\mathrm{obs}}$ ranked by $s_i$.
\ENDIF
\RETURN $\mathcal{K}_{\mathrm{retain}}$, $\mathcal{S}$
\end{algorithmic}
\end{algorithm}

\section{Generalization Error Bound}
\label{sec:theory}

\subsection{Setup and Assumptions}

Let $\mathcal{X}\subset\mathbb{R}^d$ be a bounded input domain, $\mathcal{D}$ a data distribution on $\mathcal{X}$, $F:\mathcal{X}\to\mathbb{R}$ the target solution operator, and $N_\theta\in\mathcal{H}$ a neural operator, budget $\rho\in(0,1]$. For a frequency set $S$ with $|S|=K$, let $P_S$ be the frequency-projection operator retaining the modes in $S$, and $x_S=\mathrm{IDFT}(P_S\hat{x})$ the reconstructed signal. Let $N_S^{*}$ denote the best in-class approximation of $F$ on projected inputs. 
The truncated energy is defined as
$\displaystyle \Delta_S^2 := \mathbb{E}\|x-x_S\|^2 = \sum_{k\notin S}\mathbb{E}|X[k]|^2$
where the equality follows from Parseval's identity.

The model is trained on synthetic data $\{(x_S,F(x_S))\}$ but deployed on raw inputs $x$. Applying the triangle inequality twice yields
\begin{align}
\varepsilon :=&\ \mathbb{E}|F(x)-N_\theta(x)|\nonumber\\
\le&\ \underbrace{\mathbb{E}|F(x_S)-N_S^{*}(x_S)|}_{\text{approximation}}
+\underbrace{\mathbb{E}|N_S^{*}(x_S)-N_\theta(x_S)|}_{\text{statistical/optimization}}\nonumber\\
&+\underbrace{\mathbb{E}|F(x)-F(x_S)|}_{\text{truncation}}
+\underbrace{\mathbb{E}|N_\theta(x_S)-N_\theta(x)|}_{\text{deployment}}.
\label{eq:decomp}
\end{align}

We assume:
(A1) $F$ is $L_F$-Lipschitz on $\mathcal{X}$;
(A2) input signals lie in $H^\alpha(\mathbb{T})$, $\alpha>\tfrac12$, with $\mathbb{E}\|x\|_{H^\alpha}^2\le M_\alpha$;
(A3) $\mathcal{D}$ has bounded density $\rho_{\max}$;
(A4) realizability: $\mathcal{H}$ interpolates any finite training set;
(A5) spectral-norm control $\|W_k\|_{\mathrm{op}}\le M_k$;
(A6) the training algorithm is uniformly stable with constant $\beta=O(L_NB_X/n)$~\citep{hardt2016train}, where $L_N$ is the network Lipschitz constant, $B_X$ bounds the input norm, and $n$ is the sample size.

\subsection{Key Lemmas}

\begin{lemma}[Lipschitz properties]
\label{lem:lipschitz}
Under \emph{(A1)} and \emph{(A5)}, there exist constants $L_F,L_N>0$ such that for all $x,x'\in\mathcal{X}$,
$$
|F(x)-F(x')|\le L_F\|x-x'\|,\qquad
|N_\theta(x)-N_\theta(x')|\le L_N\|x-x'\|.
$$
Moreover, $F,N_\theta\in W^{1,\infty}(\mathcal{X})\subset W^{1,1}(\mathcal{X})$.
\end{lemma}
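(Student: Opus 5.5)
The statement has two parts: the bound for $F$ is just (A1), and the bound for $N_\theta$ is obtained by composing layerwise Lipschitz estimates under (A5). The Sobolev membership then follows from Rademacher's theorem on the bounded domain $\mathcal{X}$.

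\emph{Bound for $F$.} This is immediate from (A1) with the given constant $L_F$; nothing needs to be proved.

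\emph{Bound for $N_\theta$.} I would make explicit the standard feedforward structure implicit in (A5):
\[
N_\theta = W_{D}\circ\sigma\circ W_{D-1}\circ\cdots\circ\sigma\circ W_1 ,
\]
where the $W_k$ are affine maps $x\mapsto W_kx+b_k$. I assume the activation $\sigma$ is $L_\sigma$-Lipschitz, as with ReLU, tanh or GELU, for which $L_\sigma\le 1$ or a known constant. For each layer, $\|W_kx+b_k-(W_kx'+b_k)\|\le\|W_k\|_{\mathrm{op}}\|x-x'\|\le M_k\|x-x'\|$. Induction over the composition gives
\[
|N_\theta(x)-N_\theta(x')|\le L_N\|x-x'\|,\qquad L_N:=L_\sigma^{D-1}\prod_{k=1}^{D}M_k .
\]
This constant is uniform in $\theta$ over the class constrained by (A5). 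If the operator network is a DeepONet or FNO rather than a plain MLP, the same argument applies. The branch/trunk products and the Fourier layers are compositions of bounded linear operators, with spectral multipliers bounded by $M_k$, and Lipschitz nonlinearities. On the bounded set $\mathcal{X}$ the trunk factor is bounded, so the product of Lipschitz maps is again Lipschitz there.

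\emph{Sobolev membership.} A Lipschitz function on $\mathcal{X}$ is bounded, since $\mathcal{X}$ is bounded: $|F(x)|\le|F(x_0)|+L_F\,\mathrm{diam}(\mathcal{X})$, and similarly for $N_\theta$. By Rademacher's theorem it is differentiable almost everywhere, with $\|\nabla F\|_\infty\le L_F$. The a.e.\ gradient coincides with the weak gradient (see \citet{evans2010partial}). Hence $F,N_\theta\in W^{1,\infty}(\mathcal{X})$. Finally, because $\mathcal{X}$ has finite Lebesgue measure, Hölder's inequality gives $\|g\|_{L^1}\le|\mathcal{X}|\,\|g\|_{L^\infty}$ for $g$ and $\nabla g$, which yields $W^{1,\infty}(\mathcal{X})\subset W^{1,1}(\mathcal{X})$.

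The main obstacle is not technical but definitional. (A5) alone does not imply Lipschitzness unless the activations are Lipschitz and the architecture is a composition of controlled layers. I would state these conditions explicitly as standing conventions. A secondary point is that the extension-free identification of the weak and a.e.\ gradients requires $\mathcal{X}$ to be open, or at least to have a nonempty interior with negligible boundary. I would interpret $W^{1,\infty}(\mathcal{X})$ on the interior of $\mathcal{X}$.
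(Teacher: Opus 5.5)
Your proposal is correct and follows essentially the same route as the paper: the bound for $F$ is (A1) itself (the paper only adds physical motivation via semigroup and Gronwall estimates), $L_N=\prod_k M_k$ comes from composing $M_k$-Lipschitz layers with the 1-Lipschitz ReLU, and a bounded Lipschitz function on a bounded domain lies in $W^{1,\infty}$. Your version is somewhat more careful than the paper's: it allows a general activation constant $L_\sigma$, identifies the a.e.\ gradient with the weak gradient explicitly via Rademacher's theorem, and gives the finite-measure argument for $W^{1,\infty}(\mathcal{X})\subset W^{1,1}(\mathcal{X})$, which the paper leaves implicit.
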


\begin{proof}
For a broad class of dynamical systems, the solution operator $F$ is Lipschitz continuous with respect to the input signal. For parabolic systems, this follows from analytic-semigroup estimates and Gronwall's inequality~\citep{evans2010partial}; for finite-dimensional controlled ODEs it follows from the continuous dependence of solutions on parameters and forcing terms. In either case, the output is a bounded linear or nonlinear functional of the state trajectory, and the Lipschitz constant of the input-to-output map is finite on bounded input domains. For a depth-$L$ ReLU network, each layer is $M_k$-Lipschitz by \emph{(A5)} and the ReLU activation is 1-Lipschitz, so the network is Lipschitz with constant $L_N=\prod_{k=1}^L M_k$~\citep{bartlett2017spectrally}. Bounded Lipschitz functions in a bounded domain are weakly differentiable with an essentially bounded weak gradient, hence belong to $W^{1,\infty}(\mathcal{X})$.
\end{proof}

\begin{lemma}[Approximation error]
\label{lem:approx}
Let $F\in W^{s,1}(\mathcal{X})$ with $s>0$. Under \emph{(A3)}, there exists a ReLU network $N^{*}$ of width $m$ such that
\[
\mathbb{E}_{x\sim\mathcal{D}}[|F(x)-N^{*}(x)|]\le C\cdot m^{-s/d},
\]
where $C$ depends on the Sobolev norm of $F$ and on $\rho_{\max}$.
\end{lemma}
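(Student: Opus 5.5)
The plan is to reduce the expected error under $\mathcal{D}$ to a Lebesgue $L^1$ error on $\mathcal{X}$, and then invoke a known Sobolev approximation rate for ReLU networks.

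\emph{Change of measure.} Since $\mathcal{D}$ has density $p\le\rho_{\max}$ by (A3), for any network $N$ we have
\[
\mathbb{E}_{x\sim\mathcal{D}}|F(x)-N(x)|=\int_{\mathcal{X}}|F-N|\,p\,dx\le\rho_{\max}\,\|F-N\|_{L^1(\mathcal{X})}.
\]
It therefore suffices to produce a width-$m$ ReLU network $N^*$ with $\|F-N^*\|_{L^1(\mathcal{X})}\le C'\|F\|_{W^{s,1}}\,m^{-s/d}$, and then set $C=\rho_{\max}C'$.

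\emph{Extension to a cube.} Because $\mathcal{X}$ is bounded, we enclose it in a cube $Q$ and, after an affine rescaling, take $Q=[0,1]^d$. We extend $F$ to $Q$ with a bounded Sobolev extension operator (Stein's extension for Lipschitz domains, \citealp{adams2003sobolev}), so that $\|EF\|_{W^{s,1}(Q)}\lesssim\|F\|_{W^{s,1}(\mathcal{X})}$. This requires some regularity of $\partial\mathcal{X}$, which I will assume implicitly. If $\mathcal{X}$ is already a box, this step is unnecessary.

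\emph{Approximation on the cube.} Here I follow Yarotsky's construction \citep{yarotsky2017error}, adapted to $L^1$ as in \citet{petersen2018optimal}.
\begin{enumerate}
\item Partition $Q$ into $\asymp m$ cells of side length $h\asymp m^{-1/d}$.
\item Build a ReLU partition of unity $\{\phi_j\}$ subordinate to these cells.
\item On each cell, take the averaged Taylor polynomial $P_j$ of degree $\lceil s\rceil-1$. By the Bramble--Hilbert lemma, $\|F-P_j\|_{L^1(\text{cell})}\lesssim h^{s}|F|_{W^{s,1}(\text{cell}^*)}$.
\item Summing over cells, with bounded overlap of the enlarged cells $\text{cell}^*$, gives $\|F-\sum_j\phi_jP_j\|_{L^1}\lesssim h^s\|F\|_{W^{s,1}}\asymp m^{-s/d}\|F\|_{W^{s,1}}$.
\end{enumerate}
Finally, the products $\phi_j\cdot(\text{monomials})$ are emulated by ReLU subnetworks using the squaring-by-sawtooth trick. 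This adds an error that can be made $\le m^{-s/d}$ at the cost of a logarithmic factor in size, which is absorbed into $C$ or into the width count for fixed depth.

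\emph{Main obstacle.} The hard part is the last step: keeping the network's \emph{width} at $m$ while the product-emulation overhead, which is logarithmic in $1/\varepsilon$, stays controlled. Yarotsky's result naturally counts the total number of weights, not width. I would first resolve this by interpreting width loosely, as a parameter count for fixed depth, and citing the $W^{s,p}$ rates of \citet{petersen2018optimal} with $p=1$. If $s\le1$, the argument simplifies considerably: piecewise-constant or piecewise-linear interpolation on the grid already gives the rate with a single hidden layer of width $\asymp m$, and no product emulation is needed.
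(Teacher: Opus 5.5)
Your argument is correct in substance. It shares the outer shell of the paper's proof: a grid of $\asymp m$ cells of side $m^{-1/d}$, ReLU emulation via Yarotsky, and the change of measure $\mathbb{E}_{\mathcal{D}}|F-N|\le\rho_{\max}\|F-N\|_{L^1(\mathcal{X})}$ from (A3). The local approximant, however, is genuinely different. The paper uses the piecewise-constant cell-mean function $P_m$, bounds $\|F-P_m\|$ by a Sobolev--Poincar\'e inequality on each cube, and then asserts that a width-$O(m)$ network approximates the step function $P_m$ arbitrarily well in $L^1$. You instead use local polynomials of degree $\lceil s\rceil-1$ with Bramble--Hilbert, glued by a ReLU partition of unity and realized with the squaring/product gadget.

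The paper's route buys simplicity: constant depth and no multiplications. But it only yields $m^{-\min(s,1)/d}$, because Poincar\'e gives a single power of $h$ and piecewise constants cannot do better. For $F(x)=x_1$ one has $\|F-P_m\|_{L^1}\asymp m^{-1/d}$, while the $W^{s,1}$ seminorm vanishes for every $s>1$. So the paper's intermediate bound $\|F-P_m\|_{L^p}\le C'm^{-s/d}\|D^sF\|_{L^p}$ fails there. Your route is what actually delivers $m^{-s/d}$ for all $s>0$, and your remark that the simple grid argument suffices exactly when $s\le 1$ is the right picture. You also make explicit the boundary regularity (via an extension operator) that the paper's ``partition $\mathcal{X}$ into cubes'' silently assumes.

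Some adjustments to your own sketch.

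\emph{Width.} The obstacle you flag goes away without reinterpreting width as a parameter count. The local terms $a_{j,\alpha}\,\widetilde{\times}\bigl(\phi_j,(x-x_j)^\alpha\bigr)$ run in parallel, each of width $O_{d,s}(1)$ and depth $O(\log m)$. The logarithmic overhead therefore goes into depth (Yarotsky's construction is not fixed-depth), and the total width is $O(m)$, which rescaling $m$ absorbs into $C$.

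\emph{Coefficients.} Keep the coefficients outside the gadget. In $W^{s,1}$ the averaged-Taylor coefficients can be as large as $h^{-d}\|F\|_{W^{s,1}(\mathrm{cell}^*_j)}$. The emulation error is then controlled only in $L^1$, using that each term vanishes off a set of volume $O(h^d)$ (since $\widetilde{\times}(0,\cdot)=0$) and that the enlarged cells overlap boundedly.

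\emph{The case $s\le 1$.} Use cell averages rather than point interpolation, since point values of $W^{s,1}$ functions are undefined when $s<d$. In $d\ge 2$ the resulting grid function is not a one-hidden-layer network. A second hidden layer, e.g.\ $\sigma\bigl(\sum_i t_i(x_i)-(d-1)\bigr)$ with one-dimensional trapezoids $t_i$ and $\sigma$ the ReLU, gives constant depth and width $O(m)$.
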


\begin{proof}
Partition $\mathcal{X}$ into $m$ cubes of diameter $\delta\sim m^{-1/d}$ and let $P_m$ be the piecewise-constant approximation taking the mean of $F$ on each cube. By the Sobolev--Poincar\'e inequality on each cube and summation, $\|F-P_m\|_{L^p(\mathcal{X})}\le C'm^{-s/d}\|D^sF\|_{L^p(\mathcal{X})}$. Yarotsky's approximation theorem~\citep{yarotsky2017error} shows that a ReLU network of width $O(m)$ approximates $P_m$ to arbitrary precision, and the bounded-density assumption \emph{(A3)} converts the $L^1$ bound into the expected error bound.
\end{proof}

\begin{remark}
Lemma~\ref{lem:approx} is stated for the raw $d$-dimensional domain. When the model is trained on the $K$-dimensional projected domain $T_S(\mathcal{X})$, the effective dimension is $K$ rather than $d$, improving the approximation error to $O(m^{-s/K})$. This dimensionality reduction is the core theoretical benefit of IDFT sampling.
\end{remark}

\begin{lemma}[Statistical and optimization error]
\label{lem:stat}
Under \emph{(A4)}--\emph{(A6)}, with probability at least $1-\delta$ over the training sample,
\[
\mathbb{E}_{x\sim\mathcal{D}}[|N_S^{*}(x_S)-N_\theta(x_S)|]
\le \varepsilon_{\mathrm{opt}}+O\!\left(L_NB_X\sqrt{\frac{\log(1/\delta)}{n}}\right)
\]
where $\varepsilon_{\mathrm{opt}}$ is the empirical-risk suboptimality.
\end{lemma}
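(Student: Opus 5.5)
The plan is to split the left-hand side through the training sample and use uniform stability (A6) for the generalization gap. Write $\mathcal{S}=\{(x_S^{(j)},F(x_S^{(j)}))\}_{j=1}^n$ for the synthetic training set and $\ell(h,x_S)=|N_S^{*}(x_S)-h(x_S)|$. Then
\[
\mathbb{E}|N_S^{*}(x_S)-N_\theta(x_S)| = \hat{\mathcal{E}}_{\mathcal{S}}(N_\theta) + \bigl(\mathbb{E}\,\ell(N_\theta,x_S)-\hat{\mathcal{E}}_{\mathcal{S}}(N_\theta)\bigr),
\]
where the empirical term is measured against the reference $N_S^{*}$.

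\textbf{Empirical term.} By realizability (A4) the empirical risk with respect to the labels $F(x_S^{(j)})$ can be driven to zero, so the residual of the actual training run is $\varepsilon_{\mathrm{opt}}$. The difference between fitting $F$ and fitting $N_S^{*}$ on the sample is an approximation-type quantity, which is already accounted for in the separate approximation term of \eqref{eq:decomp}. I would fold it there, or, more cleanly, read $N_S^{*}$ as the empirical interpolant target, so that the empirical term is at most $\varepsilon_{\mathrm{opt}}$.

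\textbf{Gap term.} Here I invoke the Bousquet--Elisseeff concentration result for uniformly stable algorithms. Its hypotheses are stability with constant $\beta$ and a bounded loss. Boundedness follows from Lemma~\ref{lem:lipschitz} and (A5): since $\|x_S\|\le\|x\|\le B_X$ (projection is a contraction by Parseval), we get $|N_\theta(x_S)|\le L_NB_X+|N_\theta(0)|$, and similarly for $N_S^{*}$. Hence the loss is bounded by $M_\ell=O(L_NB_X)$. The theorem then gives, with probability $1-\delta$,
\[
\mathbb{E}\,\ell-\hat{\mathcal{E}}_{\mathcal{S}}\le 2\beta+(4n\beta+M_\ell)\sqrt{\tfrac{\log(1/\delta)}{2n}}.
\]
Substituting $\beta=O(L_NB_X/n)$ makes $2\beta=O(L_NB_X/n)$ and $4n\beta=O(L_NB_X)$, so the whole gap is $O\bigl(L_NB_X\sqrt{\log(1/\delta)/n}\bigr)$.

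\textbf{Main obstacle.} The difficulty is that the stability constant (A6) is stated for the trained algorithm, which uses iterates rather than the exact ERM. I would simply take (A6) at face value for the algorithm actually run. A second issue is that the synthetic samples $x_S^{(j)}$ must be i.i.d.\ from the projected deployment law for McDiarmid to apply. I would state explicitly that the generation distribution is identified with the law of $x_S$, which is what Section 4's decomposition implicitly assumes; any mismatch is then charged to the truncation and deployment terms.
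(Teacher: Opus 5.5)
Your proposal follows essentially the same route as the paper's proof: realizability (A4) reduces the empirical term to $\varepsilon_{\mathrm{opt}}$, and uniform stability (A6) plus McDiarmid's inequality (which you use in its packaged Bousquet--Elisseeff form) gives the $O(L_NB_X\sqrt{\log(1/\delta)/n})$ gap, with the deterministic $O(\beta)$ term dominated. You are more careful than the paper, which never checks the bounded-loss hypothesis (its rate $O(\beta\sqrt{n\log(1/\delta)})$ silently drops your $M_\ell$ term) and ignores the mismatch between the empirical loss against $N_S^{*}$ and the training loss against $F$, so your handling of that step is no weaker than the original; note, though, that charging the discrepancy to the approximation term strictly adds a term that the stated bound does not contain.
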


\begin{proof}
By \emph{(A4)}, the training labels are noiseless and the empirical risk minimizer achieves zero training error, so the excess empirical risk reduces to $\varepsilon_{\mathrm{opt}}$. Uniform stability \emph{(A6)} with constant $\beta=O(L_NB_X/n)$ bounds the sensitivity of the learned network to any single training sample. McDiarmid's inequality then yields a high-probability concentration of the population risk around the empirical risk at rate $O(\beta\sqrt{n\log(1/\delta)})=O(L_NB_X\sqrt{\log(1/\delta)/n})$; the deterministic stability term $O(\beta)=O(L_NB_X/n)$ is dominated by this concentration term.
\end{proof}

\subsection{Main Theorem}

\begin{theorem}[Generalization error upper bound]
\label{thm:main}
Under \emph{(A1)}--\emph{(A6)}, with probability at least $1-\delta$,
\begin{equation}
\varepsilon\ \le\ C_1 K^{-(\alpha-\frac12)}+C_2\, m^{-s/K}+C_3\sqrt{\tfrac{\log(1/\delta)}{n}}+\varepsilon_{\mathrm{opt}},
\label{eq:main}
\end{equation}
where $K$ is the number of retained Fourier modes, $m$ the network width, $n$ the training sample size, $s$ the Sobolev regularity of $F$, and $\varepsilon_{\mathrm{opt}}$ the optimization residual. The constants are $C_1=(L_F+L_N)C_\alpha M_\alpha^{1/2}$, $C_2$ depends on the Sobolev norm of $F$ on the projected domain and on $\rho_{\max}$, and $C_3=L_NB_X$.
\end{theorem}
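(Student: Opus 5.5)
The plan is to start from the four-term decomposition \eqref{eq:decomp} and bound each term with one of the lemmas above, then combine the pieces with a union-free argument: only the statistical term is random, so the final bound holds with probability at least $1-\delta$.

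\textbf{Truncation and deployment terms.} These are handled together. By Lemma~\ref{lem:lipschitz},
\[
\mathbb{E}|F(x)-F(x_S)|+\mathbb{E}|N_\theta(x_S)-N_\theta(x)|\le (L_F+L_N)\,\mathbb{E}\|x-x_S\|\le (L_F+L_N)\,\Delta_S ,
\]
where the last step is Jensen's inequality. It then remains to show $\Delta_S\le C_\alpha M_\alpha^{1/2}K^{-(\alpha-1/2)}$.

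\textbf{Bounding $\Delta_S$.} Since the energy ranking minimizes the discarded energy among all $K$-sets, it suffices to bound the discarded energy for the low-pass set $S_0=\{|k|\le K/2\}$. Using (A2), for $|k|>K/2$ we have $\mathbb{E}|X[k]|^2\le (1+|k|^2)^{-\alpha}\,\mathbb{E}\big[(1+|k|^2)^{\alpha}|X[k]|^2\big]$. This gives
\[
\Delta_{S_0}^2\le C\,M_\alpha\sup_{|k|>K/2}(1+|k|^2)^{-\alpha}\lesssim M_\alpha K^{-2\alpha}.
\]
That rate is even better than the one claimed in \eqref{eq:main}.

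The claimed rate $K^{-(\alpha-1/2)}$ is the natural one if the error is measured in a sup-type norm. In that case one argues instead via Cauchy--Schwarz:
\[
\sum_{|k|>K/2}|X[k]|\le \Big(\sum_{|k|>K/2}|k|^{-2\alpha}\Big)^{1/2}\|x\|_{H^\alpha}\lesssim K^{-(\alpha-1/2)}\|x\|_{H^\alpha}.
\]
Taking expectations and applying Jensen gives $M_\alpha^{1/2}$. I will present this summability route, with $\alpha>\tfrac12$ guaranteeing convergence of the tail sum, since it reproduces the stated constant $C_1=(L_F+L_N)C_\alpha M_\alpha^{1/2}$ and the stated rate. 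I will also note that the $L^2$ route gives a stronger rate, so the stated bound holds either way.

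\textbf{Approximation and statistical terms.} For the approximation term I apply Lemma~\ref{lem:approx} on the projected domain $T_S(\mathcal{X})$, which is $K$-dimensional, as in the subsequent remark. The density of $x_S$ must be bounded there, and I will take this as inherited from (A3), or simply fold it into $C_2$. This yields $C_2 m^{-s/K}$ with $N_S^{*}$ chosen as that network. The statistical/optimization term is exactly Lemma~\ref{lem:stat}, which gives $\varepsilon_{\mathrm{opt}}+C_3\sqrt{\log(1/\delta)/n}$ with $C_3=L_NB_X$ up to the absolute constant. Summing the four bounds gives \eqref{eq:main}.

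\textbf{Main obstacle.} I expect the hardest step to be the truncation-rate step, namely reconciling the DFT normalization, the discrete versus continuous $H^\alpha$ norm, and the choice of norm so that exactly $K^{-(\alpha-1/2)}$ with factor $M_\alpha^{1/2}$ appears. I will handle it by identifying the discretized signal's DFT coefficients with Fourier coefficients on $\mathbb{T}$ and absorbing the normalization into $C_\alpha$.

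A secondary subtlety is that the trained network $N_\theta$ is data-dependent. However, $L_N$ is a uniform bound from (A5), so the deployment bound holds simultaneously for all $\theta$.
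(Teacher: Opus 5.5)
Your proposal is correct and follows the paper's proof: the same decomposition \eqref{eq:decomp}, Lemma~\ref{lem:lipschitz} with Parseval/Jensen for the truncation--deployment pair, Lemma~\ref{lem:approx} on the $K$-dimensional projected domain, and Lemma~\ref{lem:stat} as the only random ingredient. The one local difference is in bounding $\Delta_S$, where you are more careful than the paper (which silently replaces $k\notin S$ by $k>K$ and sums the pointwise decay $C_\alpha\|x\|_{H^\alpha}^2k^{-2\alpha}$): your reduction to the low-pass set via top-$K$ energy optimality (Corollary~\ref{cor:topk}) and your weighted-$\ell^2$ tail bound $\Delta_S\lesssim M_\alpha^{1/2}K^{-\alpha}$ already suffice on their own, whereas the $\ell^1$ route needs the expectation taken coefficientwise, $\mathbb{E}|X[k]|\le e_k^{1/2}$, before the top-$K$ ordering is invoked, because that ordering is not optimal realization by realization.
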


\begin{proof}
We bound each term in \eqref{eq:decomp} separately.

\emph{Truncation and deployment.} By Lemma~\ref{lem:lipschitz} and Parseval's identity,
\[
\mathbb{E}|F(x)-F(x_S)|+\mathbb{E}|N_\theta(x_S)-N_\theta(x)|
\le (L_F+L_N)\Delta_S.
\]
Assumption \emph{(A2)} and the Sobolev spectral decay $|\hat{x}[k]|^2\le C_\alpha\|x\|_{H^\alpha}^2k^{-2\alpha}$ imply
\[
\Delta_S^2=\sum_{k\notin S}\mathbb{E}|X[k]|^2\le C_\alpha M_\alpha\sum_{k>K}k^{-2\alpha}
\le C_\alpha M_\alpha\frac{K^{1-2\alpha}}{2\alpha-1},
\]
where the last step uses the integral test. Hence $\Delta_S=O(K^{-(\alpha-1/2)})$ and the combined truncation--deployment error is $O((L_F+L_N)K^{-(\alpha-1/2)})$.

\emph{Approximation.} Applying Lemma~\ref{lem:approx} on the $K$-dimensional projected domain gives $O(m^{-s/K})$.

\emph{Statistical and optimization.} Lemma~\ref{lem:stat} gives $\varepsilon_{\mathrm{opt}}+O(L_NB_X\sqrt{\log(1/\delta)/n})$.

Combining the three bounds yields \eqref{eq:main}.
\end{proof}

\begin{remark}[Trade-offs and the optimal $K$]
\label{rem:tradeoff}
The terms in \eqref{eq:main} correspond to the three design choices of DEFT. The truncation term decays polynomially in $K$, faster for smoother inputs. The approximation term increases with $K$ because the effective input dimension of the projected domain grows; indeed $\frac{d}{dK}m^{-s/K}=m^{-s/K}\frac{s\ln m}{K^2}>0$, an instance of the curse of dimensionality. The statistical term decays with $n$, which synthetic data can increase on demand. Since truncation and approximation push $K$ in opposite directions, an optimal budget $K^{*}$ exists, rationalizing the empirically observed optimal range $K\in[40,60]$. The first-order condition for $K^{*}$ is obtained by setting the derivative of $C_1K^{-(\alpha-1/2)}+C_2m^{-s/K}$ to zero.
\end{remark}

\begin{corollary}[Optimality of average-energy top-$K$]
\label{cor:topk}
Fix $|S|=K$ and let $e_k:=\mathbb{E}|X[k]|^2$. The set $S_K^{\star}\in\arg\max_{|S|=K}\sum_{k\in S}e_k$ minimizes $\Delta_S$ over all size-$K$ sets, hence minimizes the truncation-and-deployment error. In particular, low-pass truncation is the special case of a monotonically decaying spectrum.
\end{corollary}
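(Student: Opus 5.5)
The plan is to reduce the corollary to an elementary statement about sums of nonnegative numbers, and then pass the result through the truncation--deployment bound already proved in Theorem~\ref{thm:main}.

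\textbf{Step 1: rewrite $\Delta_S^2$.} By the definition in the setup and Parseval's identity,
\[
\Delta_S^2=\sum_{k\notin S}e_k=E_{\mathrm{tot}}-\sum_{k\in S}e_k,
\qquad E_{\mathrm{tot}}:=\sum_{k}e_k .
\]
The total energy $E_{\mathrm{tot}}$ does not depend on $S$. It is finite because, by (A2) and Parseval, $E_{\mathrm{tot}}$ is at most a constant times $\mathbb{E}\|x\|^2\le M_\alpha$ (modulo the DFT normalization). So minimizing $\Delta_S^2$ over $|S|=K$ is the same as maximizing $\sum_{k\in S}e_k$. Since $t\mapsto\sqrt t$ is increasing on $[0,\infty)$, minimizing $\Delta_S^2$ is also the same as minimizing $\Delta_S$.

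\textbf{Step 2: the maximizer is the top-$K$ set.} I will show directly that the $K$ largest $e_k$ give the maximal sum, using an exchange argument. Suppose $|S|=K$ and $S$ contains some index $j$ that is not among the top $K$. Then there is a top-$K$ index $i\notin S$ with $e_i\ge e_j$. Replacing $j$ by $i$ does not decrease $\sum_{k\in S}e_k$. Repeating this at most $K$ times turns $S$ into the top-$K$ set, so that set attains the maximum. Consequently every element of $\arg\max_{|S|=K}\sum_{k\in S}e_k$ has the same value and minimizes $\Delta_S$. Ties among the $e_k$ only make the argmax non-unique and do not affect the claim.

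\textbf{Step 3: transfer to the error term.} In the proof of Theorem~\ref{thm:main}, the truncation-and-deployment contribution was bounded by $(L_F+L_N)\Delta_S$. This bound is monotone in $\Delta_S$, so $S_K^\star$ minimizes it. Since $\Delta_S$ is the only quantity in this bound that depends on $S$, the minimization carries over to the truncation-and-deployment error as the theorem controls it. Two remarks on what this does and does not say:
\begin{itemize}[leftmargin=*]
\item The corollary should be read as minimizing this upper bound, not the exact expectation $\mathbb{E}|F(x)-F(x_S)|$.
\item The population energies $e_k$ are estimated in practice by $\bar E_k$ from Eq.~\eqref{eq:avgenergy}, so the algorithm selects the empirical version of $S_K^\star$.
\end{itemize}

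\textbf{Step 4: the low-pass special case.} If $e_k$ is non-increasing in $|k|$ (ordering frequencies by magnitude and pairing conjugate modes $k$ and $N-k$ for real signals), then the top-$K$ set is exactly the lowest-$K$ band. Low-pass truncation is therefore recovered as a special case.

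The main obstacle is interpretive rather than technical: stating precisely that optimality holds for the Lipschitz upper bound, and not for the true error. Everything else is a routine rearrangement argument.
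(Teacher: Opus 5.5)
Your proposal is correct and follows the paper's own proof. Both rewrite $\Delta_S^2$ via Parseval as the $S$-independent total energy minus $\sum_{k\in S}e_k$, and conclude that the top-$K$ set is optimal. The paper leaves several points implicit that you spell out: the exchange argument, the monotonicity of $\sqrt{\cdot}$, the conjugate-pairing remark for the low-pass case, and the caveat that what is minimized is the bound $(L_F+L_N)\Delta_S$ rather than the exact error.
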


\begin{proof}
By Parseval's identity, $\Delta_S^2=\sum_{k\notin S}e_k$. Since the total energy $\sum_k e_k$ is independent of $S$, minimizing $\sum_{k\notin S}e_k$ is equivalent to maximizing $\sum_{k\in S}e_k$, which is achieved by selecting the $K$ frequencies with the largest average energy.
\end{proof}

\section{Experiments}

\subsection{PDE Operator Learning}
\label{sec:pde}

\begin{table}[!htbp]
\centering
\caption{Overall MSE$\downarrow$/$R^2$$\uparrow$ on held-out test initial
conditions. Burgers/A--C: five spatial points over the full horizon; D--S:
full grid at four horizon fractions, nine-seed mean.}
\label{tab:pde}
\scriptsize
\setlength{\tabcolsep}{4pt}
\begin{tabular}{@{}lccc@{}}
\toprule
Method & Burgers & Allen--Cahn & 1D diff--sorp\\
\midrule
DEFT (ours) & $\mathbf{1.4\times10^{-1}}$\,/\,$\mathbf{0.7309}$ & $\mathbf{3.9\times10^{-2}}$\,/\,$\mathbf{0.9507}$ & $\mathbf{1.3\times10^{-4}}$\,/\,$\mathbf{0.9978}$ \\
Random & $4.5\times10^{-1}$\,/\,0.1411 & $4.8\times10^{-2}$\,/\,0.9387 & $3.2\times10^{-4}$\,/\,0.9946 \\
LHS & $4.9\times10^{-1}$\,/\,0.0584 & $8.5\times10^{-2}$\,/\,0.8926 & $1.6\times10^{-4}$\,/\,0.9973 \\
Bayesian & $5.0\times10^{-1}$\,/\,0.0472 & $4.7\times10^{-2}$\,/\,0.9411 & $3.3\times10^{-4}$\,/\,0.9944 \\
\bottomrule
\end{tabular}
\end{table}

Table~\ref{tab:pde} compares the sampling methods on three PDE problems.
DEFT is clearly best on the low-frequency-dominated Burgers
($R^2=0.7309$ vs.\ $\le0.1411$ for all baselines) and Allen--Cahn
($0.9507$), where shocks and phase fronts concentrate energy at low modes.
Diffusion--sorption is instead boundary-driven (only $K=2$ retained
modes); DEFT still attains the best mean accuracy ($R^2=0.9978$; nine seed
combinations), significantly beating Random and Bayesian (paired Wilcoxon
$p=0.002$) and matching LHS with lower run-to-run
variance.

\subsection{PDEBench Data Value Filtering}
\label{sec:filtering}

\begin{figure}[!htbp]
\centering
\begin{subfigure}{0.48\columnwidth}
\centering
\includegraphics[width=\linewidth]{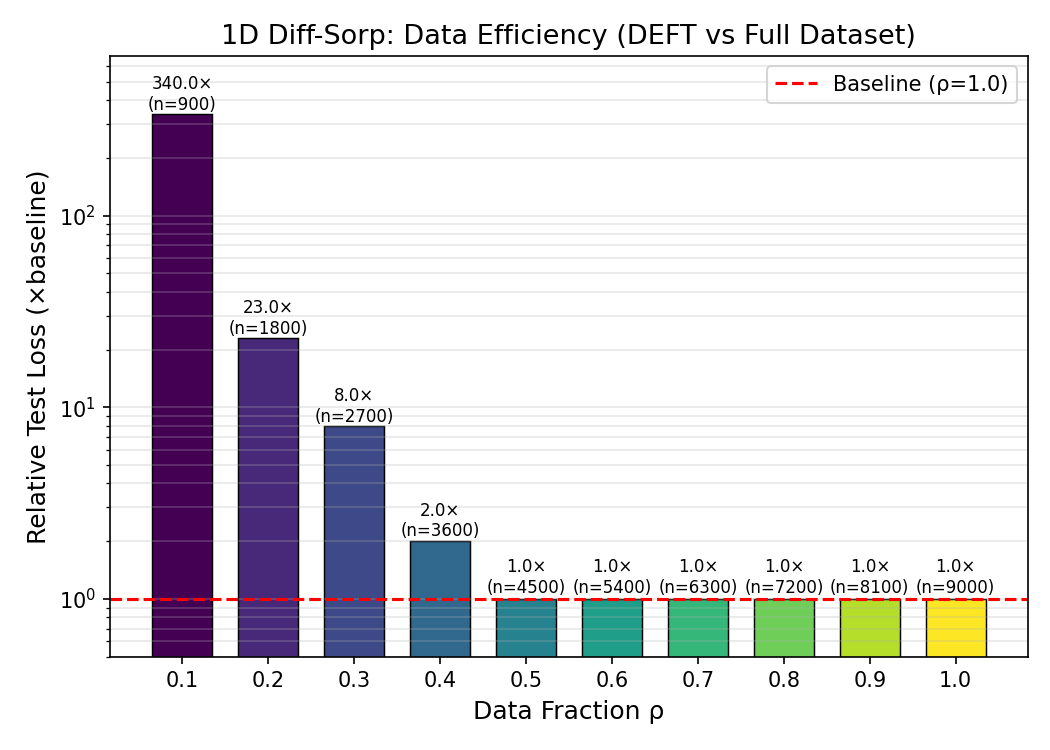}
\caption{1D diffusion--sorption.}
\label{fig:filtering}
\end{subfigure}\hfill
\begin{subfigure}{0.48\columnwidth}
\centering
\includegraphics[width=\linewidth]{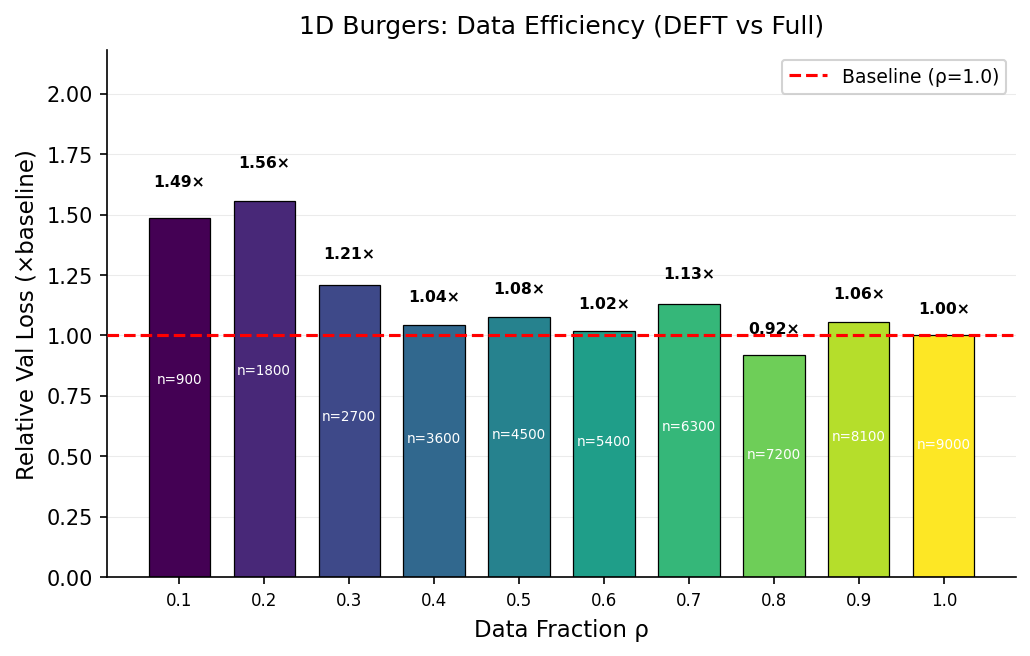}
\caption{1D Burgers.}
\label{fig:filtering2}
\end{subfigure}
\caption{Data-value filtering on two PDEBench 1D equations.}
\label{fig:filtering_all}
\end{figure}

The DEFT pipeline replaces subset selection with data generation, but the same spectral score can also rank samples that already exist, complementing data generation with data selection. We evaluate this filtering strategy on two distinct benchmarks from PDEBench~\citep{takamoto2022pdebench}: the 1D diffusion--sorption equation, whose smooth initial conditions are strongly low-frequency dominated, and the 1D Burgers equation, whose developing shock structures introduce sharper gradients and richer spectral content. For each dataset, we compute the globally dominant spectral subspace (the top-frequency set covering $95\%$ of the training-averaged power spectrum) and score every training sample by the fraction of its initial-condition energy captured inside this subspace. Retaining the top fraction $\rho$ by this spectral score yields a principled compression of the training-set. In diffusion --sorption, the top-50\% subset already matches the loss of the full-data test ($1.0\times10^{-6}$); in Burgers, the top-40\% subset stays within $5\%$ of full-data performance and the top-80\% subset slightly exceeds it.

\FloatBarrier
\subsection{Battery Remaining-Capacity Prediction}
\label{sec:battery}

We evaluate DEFT on battery remaining-capacity prediction using the COMSOL P2D--SEI model as the physics-based model. Training data are single-tone cosine current profiles $N(t)=A\cos(2\pi w t)$ with $A\in\{0.1,\dots,1.0\}$. The physics solver returns the capacity trajectory for each profile. The attention-enhanced DeepONet maps the current history to remaining capacity. We test on four synthetic waveforms of increasing complexity, the industrial CC-CV protocol and three real EV driving profiles.

Across the waveform difficulty gradient (Figure~\ref{fig:synthetic}), accuracy degrades only mildly with complexity: the square wave attains $R^2$ up to $0.998$, and even the most complex double-V waveform stays at $R^2\ge0.976$; within each waveform, the faster current variation decreases $R^2$. Errors cluster at abrupt current changes, where high-frequency content concentrates; truncating the square wave's harmonics (which decay as $1/n$) produces the expected Gibbs overshoot of about $9\%$ near switching instants, yet the retained low-frequency components still carry the dominant battery dynamics, consistent with the $O(K^{-(\alpha-1/2)})$ truncation term of Theorem~\ref{thm:main}.

\begin{figure}[H]
\centering
\includegraphics[width=0.65\columnwidth]{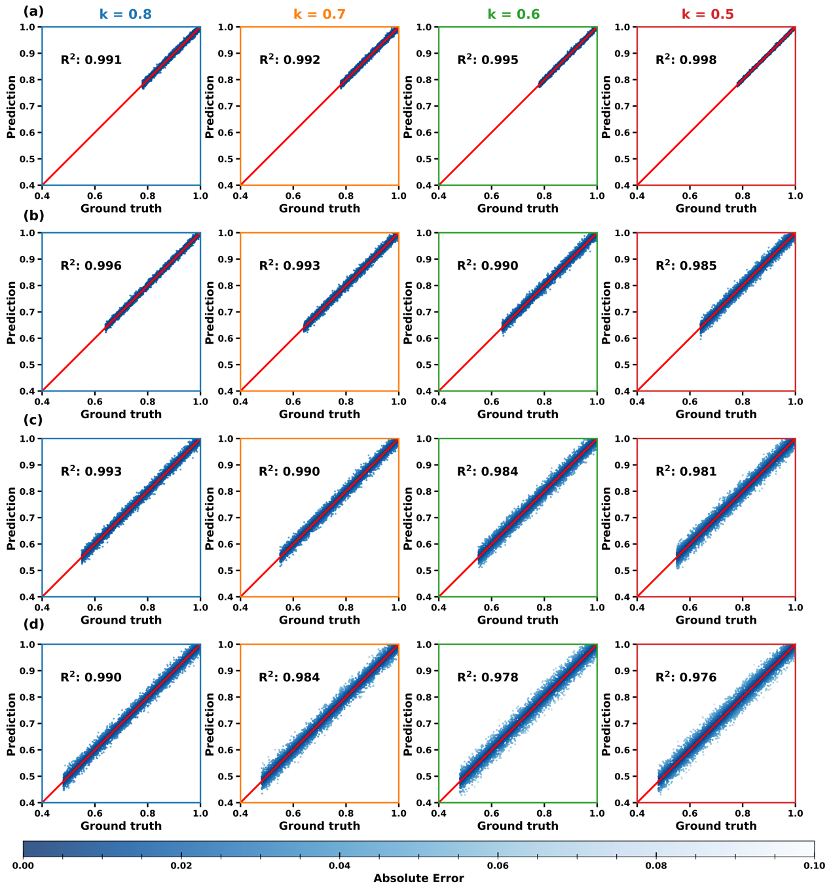}
\caption{Predicted vs.\ simulated remaining capacity for four synthetic waveforms (rows: square, V-shaped, offset-V, double-V) at four current variation rates (columns, $k=0.8$ to $0.5$). Color encodes absolute error; errors concentrate where the current changes abruptly.}
\label{fig:synthetic}
\end{figure}

On the CC-CV protocol the model achieves $R^2=0.827$; error levels in the constant-current and constant-voltage phases are comparable and there is no performance jump at the phase transition. On real EV profiles (Table~\ref{tab:real}), prediction MSE stays near $10^{-5}$; EV2 attains $R^2=0.980$, \emph{higher} than the smoother EV1 ($0.903$), while the highly dynamic EV3 still maintains $0.845$.

\begin{table}[!htbp]
\centering
\caption{Generalization under real operating conditions.}
\label{tab:real}
\small
\begin{tabular}{@{}lcc@{}}
\toprule
Operating condition & MSE & $R^2$ \\
\midrule
CC-CV charging protocol & $1.18\times10^{-5}$ & 0.827 \\
EV1 (smooth urban) & $1.10\times10^{-5}$ & 0.903 \\
EV2 (mixed suburban) & $1.06\times10^{-5}$ & {0.980} \\
EV3 (dynamic highway) & $1.00\times10^{-5}$ & 0.845 \\
\bottomrule
\end{tabular}
\end{table}

Sweeping the number of retained base frequencies $K$ from 10 to 80 reveals a staged pattern (Figure~\ref{fig:ksweep}): rises quickly up untill $K\approx60$ and saturates or slightly degrades beyond. At $K=70$, for instance, the mean $R^2$ dips below that at $K=60$, consistent with noise amplification through spurious high-frequency oscillations and increased approximation difficulty, exactly the trade-off predicted by Remark~\ref{rem:tradeoff}. 

\begin{figure}[H]
\centering
\includegraphics[width=0.5\columnwidth]{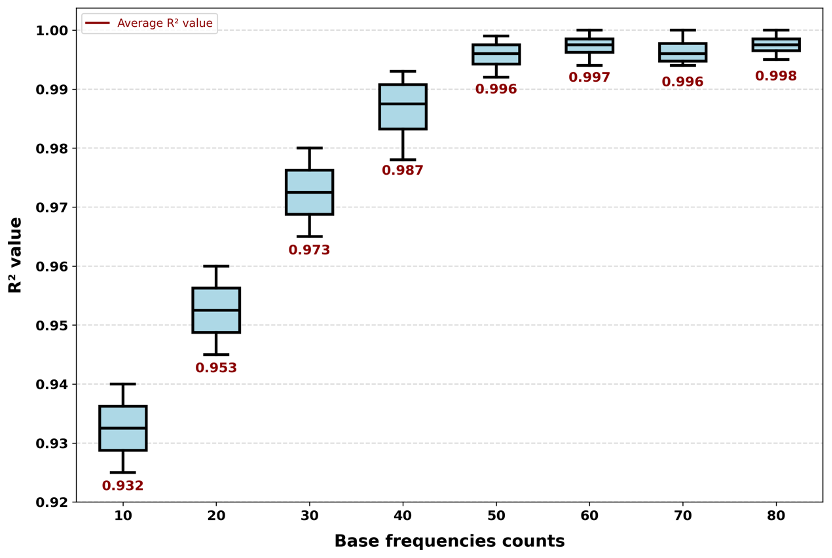}
\caption{Effect of the retained base-frequency count $K$.}
\label{fig:ksweep}
\end{figure}

The base model is trained on lithium-ion cells with LiPF$_6$ in EC:EMC $3{:}7$ electrolyte, a graphite anode, and an NCA cathode. For new chemistries we freeze the branch/trunk core and append two output layers, which are fine-tuned on a small fraction of target-domain data (Figure~\ref{fig:transfer}). With 20\% fine-tuning data, $R^2$ improves from $0.712$ to $0.989$ on EC:EMC $1{:}1$ lithium-ion cells, from $0.735$ to $0.993$ on silicon-anode cells, from $0.720$ to $0.985$ on NMC622-cathode cells, from $0.700$ to $0.971$ on sodium-ion cells, and from $0.710$ to $0.965$ on liquid-hydrogen cells. Chemically similar systems exceed $R^2=0.9$ with only 5--10\% fine-tuning data, whereas the mechanistically distinct liquid-hydrogen system needs 15--20\%. This transferability likely reflects the shared \emph{operator structure} across chemistries -- a mapping from a time-varying current to spatiotemporal capacity~\citep{severson2019data,attia2020closed}.

\begin{figure}[!htbp]
\centering
\includegraphics[width=0.85\columnwidth]{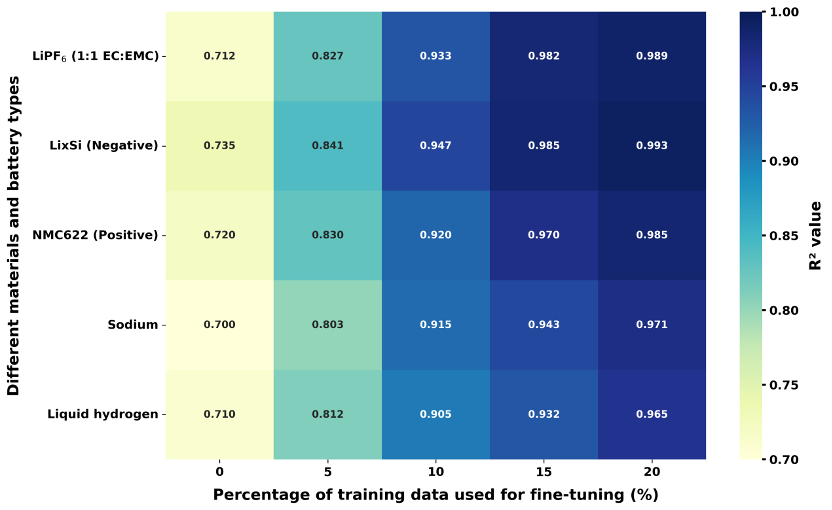}
\caption{Transfer to new battery chemistries. $R^2$ after fine-tuning two appended output layers on 0--20\% of target-domain data; the backbone (branch/trunk feature extractors) is frozen.}
\label{fig:transfer}
\end{figure}

\section{Discussion}

This study introduces DEFT, a principled data sampling optimazition method that shifts data acquisition from passive time-domain collection to active frequency-domain construction.

Several limitations remain, such as the regularity assumptions underlying our bound. Theoretically, the bound could be extended to weaker regularity classes and sharper decay laws.  Methodologically, the spectral criterion could becombined with multi-scale bases to handle broadband systems. Addressing these issues will further broaden the applicability of DEFT, and we believe DEFT sampling offers an effective and principled route toward efficient data sampling.

\bibliographystyle{unsrtnat}
\bibliography{refs}

\end{document}